\documentclass[11pt, a4paper, reqno]{amsart}

\usepackage{amssymb}
\usepackage[english]{babel}
\usepackage{blindtext}
\usepackage[tableposition=top]{caption}
\usepackage{comment}
\usepackage{csquotes}
\usepackage[inline]{enumitem}
\usepackage[T1]{fontenc}
\usepackage{graphicx}
\usepackage{mathtools}
\usepackage{physics}
\usepackage{tikz-cd}

\usepackage{hyperref}
\usepackage[capitalize]{cleveref}
\hypersetup{
    colorlinks=true,
    allcolors=black,
}

\setlist[enumerate]{topsep=2pt,itemsep=-0.5ex,partopsep=0ex,parsep=1ex}

\theoremstyle{plain}

\newtheorem{theorem}{Theorem}[section]

\newtheorem{lemma}[theorem]{Lemma}

\theoremstyle{definition}
\newtheorem{definition}[theorem]{Definition}

\theoremstyle{remark}
\newtheorem{remark}[theorem]{Remark}

\newtheorem{example}[theorem]{Example}

\title{Cartan flow matching}

\author{Francesco Ruscelli}
\author{Ferdinando Zanchetta}
\author{Rita Fioresi}
\address{Mathematical Institute, University of Heidelberg, 69120 Heidelberg, Germany}
\email{fruscelli@mathi.uni-heidelberg.de}

\address{Department of Pharmacy and Biotechnologies, University of Bologna, Bologna, Italy}
\email{ferdinando.zanchett2@unibo.it}

\address{Department of Pharmacy and Biotechnologies, University of Bologna, Bologna, Italy}
\email{rita.fioresi@unibo.it}

\begin{document}

\begin{abstract}
    We introduce Cartan flow matching, a general framework for training flow matching models on Riemannian symmetric spaces, i.e.\ Riemannian manifolds with the property that at any point there exists a geodesic symmetry. This is a large class of manifolds that includes the sphere, hyperbolic space and Grassmannians. We exploit their algebraic structure to reformulate flow matching on symmetric spaces as flow matching on a subspace of the Lie algebra of their isometry group, thus linearizing the problem and avoiding the need to construct geodesic interpolation paths on the manifold. As an application, we showcase our framework on the real Grassmannians \( \operatorname{SO}(n) / \operatorname{SO}(k) \times \operatorname{SO}(n-k) \).
\end{abstract}

\maketitle

\section{Introduction}
\label{sec:introduction}
Flow matching (FM) \cite{lipman2023} has recently emerged as an efficient and scalable way of training continuous normalizing flows (CNFs) \cite{chen2018}. The latter provide a general framework for modeling arbitrary probability paths, encompassing and going beyond the already very successful diffusion-based methods \cite{ho2020, song2019, song2021a, zhang2023}. While very flexible, CNFs have been somewhat held back by their computational cost, since maximum likelihood training \cite{grathwohl2019} requires expensive ODE simulations. Flow matching provides a simulation-free recipe for training CNFs, thus making CNFs a viable and powerful generative modeling technique.

\medskip

FM was originally introduced for modeling tasks in Euclidean space, but it has since been used for different tasks. For instance, FM has been generalized to Riemannian manifolds \cite{chen2024, zaghen2025} by using geodesics (the natural analogue of line segments in Euclidean space) to interpolate between the noise and target distributions. This approach is very easy to implement on Riemannian manifolds which admit simple, closed-form geodesics (e.g.\ the sphere, hyperbolic space, the torus, etc.). For more complicated manifolds where such formulas might not be available, \cite{chen2024} suggests the use of premetrics and, in particular, spectral distances to precompute geodesics.
FM was also extended to Lie groups \cite{sherry2026}, where the exponential map gives a canonical way of constructing probability paths.

\medskip

In this work, we study flow matching on \textit{Riemannian symmetric spaces} \cite{helgason2001}, which are Riemannian manifolds whose isometry group contains an inversion (i.e.\ an isometry reversing the direction of geodesics) at every point. Examples of such spaces are the sphere, hyperbolic space, projective spaces and Grassmannians. Symmetric spaces admit a very rich algebraic structure, as first noted by Cartan \cite{cartan1926, cartan1927}, which in particular yields a description of geodesics in terms of the \textit{Cartan decomposition} of the Lie algebra of their isometry group (see Section \ref{sec:background}). This allows us to train CNFs on these spaces by reducing the problem to a flow matching task on a Euclidean space whose dimension is the same as that of the manifold. The algebraic structure of symmetric spaces provides the encoding and decoding maps needed to move the flow matching problem off the manifold, thereby avoiding the need to compute geodesics during training. Our approach thus avoids having to compute premetrics and spectral distances and provides a stable and efficient method to tackle generative modeling problems on symmetric spaces. We also remark that when learning probabilities on symmetric spaces, which are in particular homogeneous spaces, one ideally has to do so in a way that is invariant under the action of the smaller Lie group. Our design uses the algebraic structure of symmetric spaces to avoid this problem altogether.

\medskip

As a proof of concept, we validate our approach both visually, by learning a checkerboard distribution on the sphere \( S^2 \) as in \cite{zaghen2025}, and by conducting larger scale experiments on Grassmannians. In particular, we test our framework by running experiments similar to the ones performed in \cite{yataka2023}, where the authors trained a CNF on Grassmannians with the explicit goal of stable shape generation. Owing to the generality of the framework, we expect our methods to be applicable in a range of fields, including for instance generative modeling tasks on the space of symmetric positive definite matrices or, in physics, anti-de Sitter space.

\section{Related work}
\label{sec:related work}

\subsection{Generative modeling on Riemannian manifolds}
The problem of learning probability distributions on general geometric spaces is of major interest. Early attempts involve different forms of normalizing flows on an auxiliary Euclidean space \cite{gemici2016, rezende2020, bose2020} and directly on the manifold \cite{mathieu2020, lou2020, falorsi2021, rozen2021, benhamu2022}. The applicability of these approaches is limited mainly by their computational cost and poor scalability.
Another idea that has been proposed is to adapt diffusion-based models on Euclidean space \cite{ho2020, song2021a} to more geometric settings \cite{mathieu2020, huang2022}. While these do a better job of capturing the geometry of the data, they also scale poorly to higher dimensional settings due to the fact that they require expensive simulations during training.

\subsection{Normalizing flows and flow matching}
Flow matching \cite{lipman2023, liu2023, albergo2022, neklyudov2023} was introduced as a general, simulation-free and efficient method to train CNFs \cite{chen2018}. Since then, FM has been extended and generalized in various directions. For instance, \cite{tong2024} introduced ideas based on optimal transport to improve stability of training and obtain shorter inference times.

While FM was originally designed for data living in Euclidean space, \cite{chen2024} adapted it to more general Riemannian manifolds by replacing straight line segments with geodesics. More recently, \cite{sherry2026} introduced Lie group flow matching, which exploits the algebraic structure of Lie groups to sidestep the hurdle of computing geodesics.

A completely different perspective on flow matching was provided by \cite{eijkelboom2024}, who proposed a novel interpretation of FM from a variational inference viewpoint. The same idea was then also applied to more general geometries in \cite{zaghen2025}.

Finally, the role of geometric inductive bias in the context of CNFs was explored by \cite{yataka2023} on Grassmannians. Moreover, \cite{klein2023} introduced a training objective for equivariant CNFs in order to model distributions exhibiting rotation and permutation invariance.

\section{Background}
\label{sec:background}
In this section we recall the main notions from the theory of symmetric spaces that we are going to use in later sections. We also review the basics of flow matching.

\subsection{Riemannian symmetric spaces}

A complete account of the theory of symmetric spaces can be found in \cite{helgason2001} (see also \cite{eberlein1996, bridson1999}).

\begin{definition}
  A Riemannian manifold \( (M, g) \) is \textit{locally symmetric} if for every \( p \in M \) there exists a neighborhood \( U \) of \( p \) and an isometry \( s_p \colon U \to U \) such that
    \begin{enumerate}
        \item \( s_p^2 = \operatorname{id}_U \),
        \item \( p \) is the only fixed point of \( s_p \) in \( U \).
    \end{enumerate}
    \( M \) is \textit{globally symmetric} if each \( s_p \) can be extended to an isometry of \( M \).
\end{definition}

%\begin{figure}[h]
%    \centering
%    \includegraphics[scale=0.18]{pics/sym_space.png}
%    \caption{Local and global geodesic symmetries on a Riemannian symmetric space.}
%    \label{fig:sym_space}
%\end{figure}

\begin{remark} The following remarks are in order.
    \begin{enumerate}
        \item The local isometries \( s_p \) reverse geodesics, that is if \( \gamma \colon (-\epsilon, \epsilon) \to U \) is a geodesic such that \( \gamma(0) = p \), then \( s_p \big( \gamma(t) \big) = \gamma(-t) \) for all \( t \in (-\epsilon, \epsilon) \).
        \item Globally symmetric spaces are complete, that is geodesics are defined on all of \( \mathbb{R} \).
    \end{enumerate}
\end{remark}

In the following, we will only deal with global Riemannian symmetric spaces (RSS). The above is the geometric definition of symmetric space. It turns out that RSSs can be characterized from a completely algebraic point of view, as was first observed by Cartan \cite{cartan1926, cartan1927}. In order to spell it out, we need to introduce some notation.

Consider a Lie group \( G \) with Lie algebra \( \mathfrak{g} \). The \textit{adjoint representation} \( \operatorname{Ad} \colon G \to \operatorname{GL}(\mathfrak{g}) \) of \( G \) is given by \( g \mapsto d_e \psi_g, \)
where \( \psi_g(h) = ghg^{-1} \) is conjugation by \( g \) and \( d_e \psi_g \) is its differential at the identity \( e \in G \). Note that for matrix groups \( \psi_g \) is linear, which implies that it is equal to its differential, that is \( d_e \psi_g(X) = g X g^{-1} \) for all \( X \in \mathfrak{g} \).

An \textit{involution} of \( G \) is a Lie group automorphism \( \sigma \colon G \to G \) such that \( \sigma^2 = \operatorname{id}_G \). We denote the subgroup of fixed points of \( \sigma \) by \( G^{\sigma} \). Finally, if \( H \) is a Lie group, we denote the connected component of the identity by \( H^{\circ} \).

\begin{definition}
    A \textit{Riemannian symmetric pair} (RSP) is a pair \( (G, K) \), where \( G \) is a Lie group and \( K \subseteq G \) is a topologically closed subgroup, such that
    \begin{enumerate}
        \item\label{itm:rsp_i} \( \operatorname{Ad}(K) \) is a compact subgroup of \( \operatorname{GL}(\mathfrak{g}) \),
        \item there exists a Lie group automorphism \( \sigma \colon G \to G \) satisfying \( \sigma^2 = \operatorname{id}_G \) and \( (G^{\sigma})^{\circ} \subseteq K \subseteq G^\sigma \).
    \end{enumerate}
\end{definition}

We sketch the correspondence between RSSs and RSPs. If \( (M, g) \) is an RSS, we can consider (the connected component of the identity of) its isometry group \( \operatorname{Iso}(M)^\circ \), endowed with the compact-open topology, which is a finite-dimensional Lie group acting smoothly on \( M \). Moreover, if \( p_0 \in M \) is a point, the stabilizer \( \operatorname{Stab}(p_0) \) of \( p_0 \) in \( \operatorname{Iso}(M)^\circ \) is compact. Thus, the pair \( \big( \operatorname{Iso}(M)^\circ, \operatorname{Stab}(p_0) \big) \) is a Riemannian symmetric pair. Conversely, if \( (G, K) \) is an RSP, one can prove that the homogeneous space \( G/K \) is an RSS with respect to any \( G \)-invariant metric.

\medskip

The algebraic point of view is the starting point for the classification of symmetric spaces. For our purposes, it will be sufficient to note that given an RSP \( (G, K) \) associated to the RSS \( M = G/K \), \( \sigma \) being the involution, the Lie algebra \( \mathfrak{g} \) comes with extra structure. Indeed, the Lie algebra morphism \( \theta = d_e \sigma \colon \mathfrak{g} \to \mathfrak{g} \) is an involution, i.e.\ \( \theta^2 = \operatorname{id}_{\mathfrak{g}} \), and hence it is diagonalizable with eigenvalues \( \pm 1 \). The corresponding eigenspace decomposition of \( \mathfrak{g} \) reads \( \mathfrak{g} = \mathfrak{k} \oplus \mathfrak{p} \), where \( \mathfrak{k} = \operatorname{Lie}(K) \) is the \( +1 \) eigenspace and \( \mathfrak{p} \) is the \( -1 \) eigenspace. \( \theta \) is called \textit{Cartan involution} and \( \mathfrak{g} = \mathfrak{k} \oplus \mathfrak{p} \) is the \textit{Cartan decomposition} of \( \mathfrak{g} \).

\begin{theorem}[{\cite[Theorem 3.3]{helgason2001}}]\label{thm:diag}
    Let \( o \in M \) be a basepoint and define the map
    \begin{equation*}
        \pi \colon G \to M, \quad g \mapsto g \cdot o
    \end{equation*}
    induced by the action of \( G \) on \( M \). Then, \( d_e \pi|_{\mathfrak{p}} \colon \mathfrak{p} \xrightarrow{\cong} T_{o}M  \) is an isomorphism and the diagram
    \begin{equation*}
        \begin{tikzcd}
            \mathfrak{p} \arrow[r, "d_e \pi"] \arrow[d, "\exp", swap] & T_oM \arrow[d, "\operatorname{Exp}_o"] \\
            G \arrow[r, "\pi"] & M
        \end{tikzcd}
    \end{equation*}
    commutes, where \( \exp \) is the Lie group exponential and \( \operatorname{Exp} \) is the Riemannian exponential. Moreover, every geodesic of \( M \) is induced by a one-parameter subgroup of \( G \) via the above diagram.
\end{theorem}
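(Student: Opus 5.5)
The plan is to prove the statement in three stages: the isomorphism \( d_e\pi|_{\mathfrak{p}} \colon \mathfrak{p} \to T_oM \), the commutativity of the diagram via a geodesic-symmetry argument, and the fact that every geodesic arises this way.

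\textbf{Step 1: the isomorphism.} Write \( K = \operatorname{Stab}(o) \). Since \( \pi \) factors through the diffeomorphism \( G/K \cong M \), the map \( d_e\pi \colon \mathfrak{g} \to T_oM \) is surjective. Its kernel is \( \operatorname{Lie}(K) = \mathfrak{k} \), because \( \pi(\exp tX) = o \) for all \( t \) exactly when \( X \in \mathfrak{k} \). The Cartan decomposition \( \mathfrak{g} = \mathfrak{k} \oplus \mathfrak{p} \) then shows that \( d_e\pi|_{\mathfrak{p}} \) is an isomorphism.

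\textbf{Step 2: commutativity of the diagram.} Fix \( X \in \mathfrak{p} \) and set \( \gamma(t) = \exp(tX)\cdot o \). We must show that \( \gamma \) is the geodesic with \( \gamma'(0) = d_e\pi(X) \). I would work with the geodesic symmetry \( s_o \), which relates to \( \sigma \) by \( \sigma(g) = s_o g s_o \), so that \( s_o \, g \cdot o = \sigma(g)\cdot o \).
\begin{enumerate}
    \item Since \( \theta X = -X \), we get \( \sigma(\exp tX) = \exp(-tX) \), and hence \( s_o(\gamma(t)) = \gamma(-t) \).
    \item Put \( p = \gamma(t_0) \) and \( g_0 = \exp(t_0 X) \). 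Conjugating, \( s_p = g_0 s_o g_0^{-1} \), and the same computation gives \( s_p(\gamma(t_0+t)) = \gamma(t_0-t) \).
    \item The transvections \( \tau_t = s_{\gamma(t/2)} s_o \) satisfy \( \tau_t = \exp(tX) \) as elements of \( G \). To see this, compute \( s_{\gamma(t/2)} s_o = \exp(\tfrac t2 X)\, s_o \exp(-\tfrac t2 X)\, s_o = \exp(\tfrac t2 X)\,\sigma(\exp(-\tfrac t2 X)) = \exp(tX) \).
    \item The differential \( d\tau_t \) realizes parallel transport along \( \gamma \). The reason is that \( d s_o = -\mathrm{id} \) on \( T_oM \), and a vector field \( V \) along \( \gamma \) defined by \( V(t) = d\tau_t(v) \) satisfies \( ds_o V(-t) = -V(t) \)-type symmetry. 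Evaluated at the midpoint, this forces \( \nabla_{\gamma'} V(0) = -\nabla_{\gamma'} V(0) = 0 \), and translating by \( \tau_t \) gives vanishing everywhere.
    \item Applying this to \( V = \gamma' = d\tau_t(\gamma'(0)) \), which holds because \( \gamma(t+s) = \tau_t\gamma(s) \), we get \( \nabla_{\gamma'}\gamma' = 0 \). So \( \gamma \) is a geodesic, and \( \gamma(1) = \operatorname{Exp}_o(d_e\pi X) \).
\end{enumerate}

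\textbf{Step 3: every geodesic arises this way.} By completeness (Remark, item 2) and uniqueness of geodesics, any geodesic through \( o \) is \( t \mapsto \operatorname{Exp}_o(tv) \). By Step 1 we can write \( v = d_e\pi(X) \) for some \( X \in \mathfrak{p} \), so the geodesic equals \( \exp(tX)\cdot o \). A geodesic through an arbitrary point \( g\cdot o \) is \( g \) applied to a geodesic through \( o \), since \( G \) acts by isometries. It is therefore \( g\exp(tX)g^{-1}\cdot(g\cdot o) \), which is the orbit of the one-parameter subgroup \( \exp(t\operatorname{Ad}(g)X) \).

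\textbf{Main obstacle.} The main difficulty is item 4 of Step 2, showing that \( d\tau_t \) is parallel transport along \( \gamma \). I would first try the symmetry argument above, using \( \nabla ds_o = ds_o\nabla \) together with \( ds_o|_{T_oM} = -\mathrm{id} \). A second care point is the identification \( \sigma(g) = s_o g s_o \). When \( G = \operatorname{Iso}(M)^\circ \) this holds by construction, and for a general symmetric pair I would take it as the defining relation of the symmetric-space structure.
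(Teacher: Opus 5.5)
Your proof is correct, but it runs in the opposite direction from the standard argument. The paper gives no proof of this statement and simply cites Helgason. There, with \( G = \operatorname{Iso}(M)^\circ \) and \( \sigma(g) = s_o g s_o \), the argument goes as follows:
\begin{enumerate}
\item Start from the geodesic \( \gamma \) with \( \gamma'(0) = d_e\pi(X) \), which is defined on all of \( \mathbb{R} \) by completeness.
\item Form the transvections \( T_t = s_{\gamma(t/2)} s_o \) and show that \( dT_t \) is parallel transport along \( \gamma \), using that \( s_{\gamma(t/2)} \) reverses the \emph{already known} geodesic.
\item Conclude \( T_tT_u = T_{t+u} \), since an isometry is determined by its value and differential at one point, and write \( T_t = \exp(tZ) \).
\item Identify \( Z = X \): from \( \sigma(T_t) = T_{-t} \) one gets \( Z \in \mathfrak{p} \), and \( d_e\pi(Z) = \gamma'(0) = d_e\pi(X) \).
\end{enumerate}

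You instead take the candidate curve \( \exp(tX)\cdot o \) and verify the geodesic equation directly. The covariant derivative of any field \( t \mapsto d\tau_t(v) \) vanishes at \( t=0 \), because \( s_o \) reverses the curve and acts as \( -\mathrm{id} \) on \( T_oM \). It vanishes at every other time by homogeneity under \( \tau_{t_0} \).

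What your route buys:
\begin{enumerate}
\item It needs neither completeness (it proves it) nor the rigidity of isometries or the Lie group structure of \( \operatorname{Iso}(M) \).
\item It gives Helgason's additional statement for free: the differential of \( \exp(tX) \) is parallel transport along the curve.
\item It applies verbatim to an arbitrary pair \( (G,K) \) with any \( G \)-invariant metric, which is the setting in which the paper actually uses the theorem.
\end{enumerate}
In exchange, the reference's route shows how the one-parameter subgroups arise from the geodesic symmetries of \( M \) itself.

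\textbf{Tightening the care point.} In the general-pair case, define \( s_o(gK) = \sigma(g)K \), which is well defined since \( K \subseteq G^\sigma \), rather than postulating \( \sigma(g) = s_o g s_o \). Then:
\begin{enumerate}
\item \( s_o(g\cdot x) = \sigma(g)\cdot s_o(x) \) holds at the level of the action. That is all you use, and it matters since \( G \) need not act effectively.
\item Differentiating \( s_o(\exp(tX)\cdot o) = \exp(t\theta X)\cdot o \) and using Step 1 gives \( ds_o|_{T_oM} = -\mathrm{id} \).
\item The intertwining relation carries this to every point, so \( s_o \) is an isometry for every \( G \)-invariant metric.
\end{enumerate}

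\textbf{Superfluous steps.} Items 2 and 3 are never used. Item 4 needs only \( s_o\tau_t s_o = \tau_{-t} \), \( \tau_{t_0}\gamma(s) = \gamma(t_0+s) \) and \( ds_o|_{T_oM} = -\mathrm{id} \), all immediate for \( \tau_t := \exp(tX) \). Likewise, Step 3 needs only uniqueness of geodesics, not completeness.
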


\begin{example}\label{ex:sym} Here are some examples. For a more detailed discussion, see Appendix \ref{sec:compact-rss}.
    \begin{enumerate}
        \item\label{itm:i} The sphere \( S^n \) can be written as \( \operatorname{SO}(n+1) / \operatorname{SO}(n) \). Indeed, the orientation-preserving isometries of \( S^n \) are precisely given by the restrictions to \( S^n \) of orthogonal transformations, i.e.\ \( \operatorname{Iso}(S^n)^\circ = \operatorname{SO}(n+1) \). Furthermore, if we take the north pole \( e_{n+1} = (0, \dots, 0, 1) \in S^n \) as the basepoint, we have
            \begin{equation*}
                \operatorname{Stab}(e_{n+1}) = \bigg\{
                \begin{pmatrix}
                    A & 0 \\
                    0 & 1
                \end{pmatrix}
                \colon A \in \operatorname{SO}(n) \bigg\}
                \cong \operatorname{SO}(n).
            \end{equation*}
            The involution \( \sigma \) on \( \operatorname{SO}(n+1) \) is given by
            \begin{equation*}
                \sigma
                \begin{pmatrix}
                    A & b \\
                    c & d
                \end{pmatrix}
                =
                \begin{pmatrix}
                    A & -b \\
                    -c & d
                \end{pmatrix}.
            \end{equation*}

        \item\label{itm:iii} The \textit{real oriented Grassmannian} \( \operatorname{Gr}(k, n) \) is defined as the space of oriented \( k \)-dimensional subspaces of \( \mathbb{R}^n \). By identifying a \( k \)-dimensional subspace with an orthonormal basis (up to orientation-preserving change of basis), it is not too hard to see that
            \begin{equation*}
                \operatorname{Gr}(k, n) \cong \operatorname{SO}(n) / \operatorname{SO}(k) \times \operatorname{SO}(n-k).
            \end{equation*}
            The involution on \( \operatorname{SO}(n) \) is given in block form by
            \begin{equation*}
                \sigma
                \begin{pmatrix}
                    A & B \\
                    C & D
                \end{pmatrix}
                =
                \begin{pmatrix}
                    A & -B \\
                    -C & D
                \end{pmatrix}, \quad A \in M_{k, k}(\mathbb{R}), D \in M_{n-k, n-k}(\mathbb{R}).
            \end{equation*}
            A quick computation shows that the Cartan decomposition of \( \mathfrak{so}(n) \) is
            \begin{equation*}
                \mathfrak{so}(n) = \overbrace{\big( \mathfrak{so}(k) \oplus \mathfrak{so}(n-k) \big)}^{\mathfrak{k}}
                \oplus
                \underbrace{
                \bigg\{
                    \begin{pmatrix}
                        0 & X \\
                        -X^t & 0
                    \end{pmatrix}
                    \colon X \in M_{k, n-k}(\mathbb{R})
                \bigg\}
                }_{\mathfrak{p}}.
            \end{equation*}
    \end{enumerate}
\end{example}

\subsection{Flow matching}
Flow matching \cite{lipman2023} is a powerful method for training CNFs \cite{chen2018} that allows for fast, simulation-free and efficient training.

\medskip

CNFs employ a neural network to learn a vector field that generates a desired probability path. More precisely, suppose we have two probability measures \( \mu_0, \mu_1 \) on \( \mathbb{R}^d \). In practice, \( \mu_0 \) is identified with an arbitrary noise distribution and \( \mu_1 \) is given in the guise of a dataset (all the measures we consider are assumed to be absolutely continuous with respect to the Lebesgue measure on \( \mathbb{R}^d \). Hence, by the Radon-Nikodym theorem, we may identify measures with probability densities). The idea is then to look for a time-dependent vector field \( v_t \) on \( \mathbb{R}^d \) whose flow pushes \( \mu_0 \) forward to \( \mu_1 \). In other words, if \( \varphi_t, t \in [0, 1] \) is the flow of \( v_t \),
\(
    \mu_t = (\varphi_t)_* \mu_0
\)
is a path of probability measures connecting \( \mu_0 \) and \( \mu_1 \). We denote by \( p_t \) the associated density.

A simple loss function that captures this is given by
\begin{equation}\label{eq:fm_loss}
    \mathcal{L}_{\mathrm{FM}}(\theta) = \mathbb{E}\big[ || u^\theta_t(x) - v_t(x) ||^2 \big],
\end{equation}
where \( t \sim U(0, 1) \), \( x \sim p_t \) and \( u^\theta \) is a neural network parametrized by \( \theta \). However, this loss is usually intractable mainly due to the fact that we generally do not have access to the target vector field \( v_t \).
In order to sidestep this issue, in \cite{lipman2023} it was that noticed that a choice of probability path \( p_t \) can be constructed by averaging conditional probability paths, that is \( p_t(x) = \int_{\mathbb{R}^d} p_t(x | x_1) p_1(x_1) d x_1 \),
where \( p_0(x|x_1) = p_0(x) \) and \( p_1(x | x_1) \approx \delta_{x_1}(x) \). Then, if we are able to construct conditional vector fields \( v_t(x | x_1) \) generating \( p_t(x | x_1) \), \cite[Theorem 1]{lipman2023} shows that the desired vector field is given by
\begin{equation}\label{eq:vf}
    v_t(x) = \int_{\mathbb{R}^d} v_t(x | x_1) \frac{p_t(x | x_1) p_1(x_1)}{p_t(x)} d x_1.
\end{equation}
Again, \eqref{eq:vf} is generally intractable to compute in practice. The key insight, however, is that it is not necessary to compute it. Indeed, by \cite[Theorem 2]{lipman2023}, minimizing \eqref{eq:fm_loss} is equivalent to minimizing the \textit{conditional flow matching loss}
\begin{equation}\label{eq:cfm_loss}
    \mathcal{L}_{\mathrm{CFM}}(\theta) = \mathbb{E}\big[ || u^\theta_t(x) - v_t(x|x_1) ||^2 \big],
\end{equation}
where \( t \sim U(0, 1), x \sim p_t(\cdot | x_1) \) and \( x_1 \sim p_1 \).

The easiest example of a conditional probability path is the one generated by straight line interpolation, i.e.\ \( v_t(x | x_1) = \frac{x_1 - x}{1-t} \). Letting \( p_t \) denote the resulting probability path and \( X_0 \sim p_0, X_1 \sim p_1 \) be random variables, we have \( X_t = (1-t)X_0 + tX_1 \sim p_t \). For later use, we refer to this setup as \textit{Euclidean flow matching}.

\section{Flow matching on an RSS}
\label{sec:flow_matching_on_an_rss}
Consider a symmetric space \( M = G/K \) and suppose we have a probability density function \( p \) on \( M \). We can think of it as a \( K \)-invariant density \( \tilde{p} \) on \( G \), i.e.\ \( \tilde{p} = p \circ \pi \), where \( \pi \colon G \to G/K \) is the natural projection. Now learning \( \tilde{p} \) directly, although possible, requires learning \( K \)-invariance. This can either be completely ignored, leaving the burden of learning this structure to the neural network, or it can be addressed by building invariance into the model. The latter option, however, requires adapting the neural network to \( K \), making a general-purpose architecture hard to design.

Our goal in this section is to describe an alternative, fully general strategy to use flow matching for generative modeling on \( G/K \).

\subsection{Cartan flow matching}
As shown by Theorem \ref{thm:diag}, symmetric spaces have the property that the tangent space \( T_o M \) at a fixed basepoint \( o \in M \) can be canonically identified with \( \mathfrak{p} \). Thus, our strategy is to represent each point in \( M = G/K \) via an element (not necessarily unique) in \( \mathfrak{p} \cong T_o M \), where \( o = eK \in M \) is the basepoint. Specifically, for every \( x \in M \) we choose an element \( L(x) \in \mathfrak{p} \) such that
\begin{equation}\label{eq:right_inverse}
    x = \pi\big(\exp(L(x))\big).
\end{equation}
Such an element always exists: globally symmetric spaces are complete, hence the Riemannian exponential \( \operatorname{Exp}_o \) is surjective, and the diagram in Theorem \ref{thm:diag} identifies it with \( \pi \circ \exp|_{\mathfrak{p}} \). The choice need not be unique on the cut locus, and in what follows we make a choice whenever necessary. We remark however that the cut locus has measure zero.

\begin{remark}
    It is key that \eqref{eq:right_inverse} holds. In general, if \( Z \in \mathfrak{g} \) is an arbitrary logarithm of a representative of \( x \), then projecting it to \( \mathfrak{p} \) does not in general preserve the represented point in \( G/K \), i.e.\ \( \pi \big( \exp (\operatorname{proj}_{\mathfrak{p}} Z) \big) \neq x \) in general.
\end{remark}

Once a choice \( L \colon M \to \mathfrak{p} \) satisfying \eqref{eq:right_inverse} has been fixed, we push the target distribution forward to \( \mathfrak{p} \), perform Euclidean flow matching there, and at inference time map generated points back to \( M \) through \( \pi \circ \exp \). In particular, if \( \mu \) is the target probability measure on \( M \) and \( \nu = L_*\mu \), then
\(
    (\pi \circ \exp)_*\nu
    = (\pi \circ \exp)_*L_*\mu
    = \mu.
\)
Thus, learning the distribution of the chosen elements in \( \mathfrak{p} \) is sufficient to recover the desired distribution on the symmetric space.

The Euclidean interpolation used by flow matching on \( \mathfrak{p} \) should not be confused with geodesic interpolation on \( M \). By Theorem \ref{thm:diag}, for a fixed \( X \in \mathfrak{p} \), the curve \( t \mapsto \pi(\exp(tX)) \) is a geodesic emanating from the basepoint. However, for arbitrary \( X_0, X_1 \in \mathfrak{p} \), the curve
\(
    t \longmapsto \pi\big(\exp((1-t)X_0 + tX_1)\big)
\)
is generally not the geodesic joining \( \pi(\exp(X_0)) \) and \( \pi(\exp(X_1)) \). Our method therefore does not interpolate data points by geodesics on \( M \). Rather, it exploits the canonical identification \( \mathfrak{p} \cong T_o M \) and Theorem \ref{thm:diag} to move the flow matching problem to the Euclidean space \( \mathfrak{p} \), where interpolation is straightforward, and uses the symmetric space structure only to encode and decode points. In this way, we can do away with explicitly computing geodesics on the manifold.

\subsection{Real Grassmannians}
\label{sec:grassmann_log}
For the real Grassmannian, the required choice of element in \( \mathfrak{p} \) can be computed explicitly. Let
\(
    o = \operatorname{span}(e_1, \dots, e_k) \in \operatorname{Gr}(k,n)
\)
be the basepoint and let \( Y \in M_{n,k}(\mathbb{R}) \) have orthonormal columns, so that \( \operatorname{span}(Y) \) represents a point of \( \operatorname{Gr}(k,n) \). Write
\begin{equation*}
    Y =
    \begin{pmatrix}
        Y_1 \\
        Y_2
    \end{pmatrix},
    \qquad
    Y_1 \in M_{k, k}(\mathbb{R}),
    \quad
    Y_2 \in M_{n-k,k}(\mathbb{R}).
\end{equation*}

\begin{lemma}\label{lemma:grassmann_log}
    Assume that \(Y_1\) is invertible and let
    \begin{equation*}
        Y_2Y_1^{-1} = U\Sigma V^t,
        \qquad
        \Sigma = \operatorname{diag}(\sigma_1,\dots,\sigma_q),
        \qquad
        q=\min\{k,n-k\},
    \end{equation*}
    be a compact singular value decomposition with the singular values in decreasing order. We further assume that the largest singular value \( \sigma_1 \) is positive and simple. Define
    \begin{equation*}
        \Theta =
        \begin{cases}
            \operatorname{diag}(\arctan(\sigma_1),\dots,\arctan(\sigma_q)), & \det Y_1 > 0 \\
            \operatorname{diag}(\arctan(\sigma_1) - \pi, \arctan(\sigma_2), \dots,\arctan(\sigma_q)), & \det Y_1 < 0 \\
        \end{cases}
    \end{equation*}
    and set
    \begin{equation*}
        X_Y = -V\Theta U^t,
        \qquad
        A(X_Y) =
        \begin{pmatrix}
            0 & X_Y \\
            -X_Y^t & 0
        \end{pmatrix}
        \in \mathfrak{p}.
    \end{equation*}
    Then
    \begin{equation*}
        \operatorname{span}\big(\exp(A(X_Y))^1, \dots, \exp(A(X_Y))^k \big)
        = \operatorname{span}(Y),
    \end{equation*}
    as oriented \( k \)-planes, where \( \exp(A(X_Y))^i \) is the \( i \)-th column of \( \exp(A(X_Y)) \). Moreover, if \( \tilde{Y} = Y Z \), where \( Z \in \operatorname{SO}(k) \), then \( X_Y = X_{\tilde{Y}} \).
\end{lemma}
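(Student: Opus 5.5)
The plan is to compute the first \(k\) columns of \(\exp(A(X))\) in closed form for \(X = -V\Theta U^t\). Then I compare the resulting \(n\times k\) matrix with \(Y\) up to right multiplication by a matrix of positive determinant. The starting observation is that
\begin{equation*}
  A(X)^2 = \begin{pmatrix} -XX^t & 0 \\ 0 & -X^tX \end{pmatrix},
\end{equation*}
so splitting the exponential series into even and odd powers gives, for the first block column of \(\exp(A(X))\),
\begin{equation*}
  \begin{pmatrix} \cos\big(\sqrt{XX^t}\big) \\ -X^t\, \mathrm{sinc}\big(\sqrt{XX^t}\big) \end{pmatrix},
\end{equation*}
where both matrix functions are understood as even power series in \(XX^t\). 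Here \(\mathrm{sinc}(s) = \sin(s)/s\), which is entire and equals \(1\) at \(s = 0\).

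Since \(V\in M_{k,q}\) and \(U\in M_{n-k,q}\) have orthonormal columns, we have \(XX^t = V\Theta^2V^t\). The functional calculus then gives the top block
\begin{equation*}
  T := V\cos(\Theta)V^t + (I_k - VV^t),
\end{equation*}
where the second term accounts for \(\ker(XX^t)\) when \(q<k\). For the bottom block I compute \(-X^t\,\mathrm{sinc}(\sqrt{XX^t}) = U\Theta V^t\, V\,\mathrm{sinc}(\Theta)V^t = U\sin(\Theta)V^t\), using that the relevant functions are even, so the sign of \(\theta_1 - \pi\) is harmless. This bookkeeping, especially the complementary projector \(I - VV^t\) and the zero angles, is the step I expect to require the most care.

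Next I check that \(T\) is invertible. Every diagonal entry of \(\Theta\) lies in \((-\pi/2,\pi/2)\), except possibly \(\arctan(\sigma_1)-\pi\), whose cosine is \(-\cos(\arctan\sigma_1)\neq 0\). Moreover \(\tan\) is \(\pi\)-periodic, so \(\tan\Theta = \Sigma\) in both cases. Hence the bottom block times \(T^{-1}\) is \(U\sin(\Theta)\cos(\Theta)^{-1}V^t = U\Sigma V^t = Y_2Y_1^{-1}\), and \(T\) acts as the identity on the complement of the column span of \(V\). It follows that the first \(k\) columns of \(\exp(A(X_Y))\) form the matrix
\begin{equation*}
  \begin{pmatrix} I \\ Y_2Y_1^{-1}\end{pmatrix} T = Y\,(Y_1^{-1}T).
\end{equation*}
So the unoriented spans agree.

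For orientation I need \(\det(Y_1^{-1}T) > 0\). The eigenvalues of \(T\) are \(\cos\theta_1,\dots,\cos\theta_q\) together with \(1\) with multiplicity \(k-q\), so \(\det T = \prod_i\cos\theta_i\). This product is positive when \(\det Y_1>0\). When \(\det Y_1<0\), exactly the first factor flips sign, so \(\det T<0\). In both cases \(\det(Y_1^{-1}T)>0\), which proves the oriented statement.

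Finally, for the invariance claim I first show that \(X_Y\) does not depend on the choice of compact SVD. The term \(V\operatorname{diag}(\arctan\sigma_i)U^t\) is \(\arctan\) applied via the odd functional calculus to \(Y_2Y_1^{-1}\), hence intrinsic. The correction \(-\pi\, v_1u_1^t\) is well defined because \(\sigma_1>0\) is simple, so the pair \((u_1,v_1)\) is unique up to a simultaneous sign change. Then for \(\tilde Y = YZ\) with \(Z\in\operatorname{SO}(k)\), we get \(\tilde Y_1 = Y_1Z\) and \(\tilde Y_2 = Y_2Z\). Therefore \(\tilde Y_2\tilde Y_1^{-1} = Y_2Y_1^{-1}\) and \(\operatorname{sign}\det\tilde Y_1 = \operatorname{sign}\det Y_1\), so \(X_{\tilde Y} = X_Y\).
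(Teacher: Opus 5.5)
Your proof is correct, and its core is the same as the paper's. Both arguments rest on the closed form $\begin{pmatrix} I_k - VV^t + V\cos(\Theta)V^t \\ U\sin(\Theta)V^t \end{pmatrix}$ for the first $k$ columns of $\exp(A(X_Y))$. The paper leaves this as ``a straightforward calculation''; you actually derive it from $A(X)^2$ and the even/odd splitting of the series.

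The two routes differ in how this frame is compared with $Y$.

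The paper works from $Y$ towards the exponential. It writes the polar decomposition $Y_1 = CR$, with $C = \bigl(I_k + (Y_2Y_1^{-1})^t(Y_2Y_1^{-1})\bigr)^{-1/2}$ positive definite and $R \in O(k)$. It then identifies $C$ with the top block via the SVD and obtains $Y = QR$ with $\det R = \operatorname{sgn}(\det Y_1)$. In the case $\det Y_1 < 0$ it introduces the reflection $S = I_k - VV^t + VDV^t$, so that the frame is $QS$ and $Y = (QS)(SR)$ with $SR \in \operatorname{SO}(k)$.

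You go the other way. Using only the $\pi$-periodicity of $\tan$, you show the frame equals $Y\,(Y_1^{-1}T)$, where $T$ is your top block. You then read off the orientation from the sign of $\det T = \prod_i \cos\theta_i$. This treats both cases uniformly and dispenses with the polar decomposition and the auxiliary reflection. The paper's version, in exchange, exhibits the relating element of $\operatorname{SO}(k)$ explicitly. In your version that element is $Y_1^{-1}T$, which is automatically orthogonal since both frames are orthonormal.

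For the invariance claim your argument is more complete than the paper's. The paper only says that ``the same chosen singular value decomposition'' is used for $Y$ and $\tilde Y$. It thus tacitly treats the SVD as a fixed function of $Y_2Y_1^{-1}$ and never uses that $\sigma_1$ is positive and simple. You show that $X_Y$ is independent of the choice of SVD altogether, and this is exactly where that hypothesis is needed.

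One small slip: $V\arctan(\Sigma)U^t$ is the odd functional calculus applied to $(Y_2Y_1^{-1})^t$, not to $Y_2Y_1^{-1}$. This does not affect the argument.
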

\begin{proof}
   See Appendix \ref{sec:proof_grassmann_log}.
\end{proof}

\begin{theorem}
    For the Grassmannian \( \operatorname{Gr}(k, n) \) there exists an encoding \( L \colon \operatorname{Gr}(k, n) \to \mathfrak{p} \) satisfying \eqref{eq:right_inverse} defined on the complement of a set of measure zero.
\end{theorem}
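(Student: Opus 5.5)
The plan is to define \( L \) directly from Lemma \ref{lemma:grassmann_log} and then check two things: that \( L \) is well defined on oriented \( k \)-planes, and that the set where the lemma does not apply has measure zero.

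\medskip

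\textbf{The good set.} Let \( \mathcal{U} \subseteq \operatorname{Gr}(k,n) \) consist of the oriented planes \( P \) such that, for some (equivalently any) oriented orthonormal basis \( Y \) of \( P \):
\begin{enumerate}
    \item \( Y_1 \) is invertible;
    \item the largest singular value of \( Y_2 Y_1^{-1} \) is positive and simple.
\end{enumerate}

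\medskip

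\textbf{Independence of the basis.} Two oriented orthonormal bases of the same oriented plane differ by right multiplication by some \( Z \in \operatorname{SO}(k) \). Under \( Y \mapsto YZ \):
\begin{enumerate}
    \item \( Y_1 \mapsto Y_1 Z \), so invertibility of \( Y_1 \) and the sign of \( \det Y_1 \) are preserved;
    \item \( Y_2 Y_1^{-1} \) is unchanged.
\end{enumerate}
Hence membership in \( \mathcal{U} \) is well defined. For \( P = \operatorname{span}(Y) \in \mathcal{U} \), set \( L(P) = A(X_Y) \in \mathfrak{p} \). The last assertion of Lemma \ref{lemma:grassmann_log} says this does not depend on the choice of \( Y \).

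There is one subtlety: when singular values repeat, the SVD factors \( U, V \) are not unique. The matrix \( V\Theta U^t \) is still well defined, because it equals \( g(Y_2Y_1^{-1}) \) for a matrix function \( g \) built from the functional calculus of \( \arctan \). The extra shift by \( -\pi \) is applied only along the top singular pair, which is determined up to a simultaneous sign because \( \sigma_1 \) is simple, and that sign cancels in \( u_1 v_1^t \).

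\medskip

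\textbf{The encoding property.} The first column block of \( \exp(A(X_Y)) \) spans \( P \) as an oriented plane, by the main conclusion of the lemma. Since \( \pi(g) = g \cdot o \) is exactly the span of the first \( k \) columns of \( g \), this gives \( \pi(\exp(L(P))) = P \), which is \eqref{eq:right_inverse}.

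\medskip

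\textbf{The complement has measure zero.} This is the main point, and I would use analyticity. Pull back along the smooth submersion \( \operatorname{SO}(n) \to \operatorname{Gr}(k,n) \), taking the first \( k \) columns as \( Y \). Since submersions preserve null sets in both directions, it suffices to show that the bad set in the connected analytic manifold \( \operatorname{SO}(n) \) is null.

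\emph{Failure of invertibility.} This is the zero set of the real-analytic function \( \det Y_1 \). The function is not identically zero, since it equals \( 1 \) at the identity. The zero set of a nonzero analytic function on a connected analytic manifold has measure zero.

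\emph{Failure of the singular value condition, given \( Y_1 \) invertible.} Set \( S = Y_1^{-t} Y_2^t Y_2 Y_1^{-1} \), a \( k\times k \) matrix.
\begin{enumerate}
    \item \( \sigma_1 = 0 \) means \( Y_2 = 0 \). This is a proper analytic subset, contained in the zero set of the analytic function \( \operatorname{tr}(Y_2^tY_2) \).
    \item A non-simple \( \sigma_1 \) forces a repeated eigenvalue of \( S \). Clearing denominators, this is contained in the zero set of \( \det(Y_1)^{N}\operatorname{disc}(\chi_S) \) for suitable \( N \), which is analytic on \( \operatorname{SO}(n) \).
\end{enumerate}

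The one step needing care is showing that this last function is not identically zero. I would exhibit an explicit rotation with distinct principal angles, for instance a product of \( q \) commuting plane rotations in the coordinate planes \( (e_i, e_{k+i}) \) with distinct small angles. In the case \( k > n-k \), \( S \) also has \( k-q \) forced zero eigenvalues. There I would replace the discriminant by the discriminant of \( \chi_S(\lambda)/\lambda^{k-q} \), or simply the condition "top eigenvalue simple", which is still cut out by an analytic (polynomial) condition on the coefficients of \( \chi_S \) that fails only on a proper subvariety.

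The union of these finitely many null sets is null, so \( \mathcal{U} \) has full measure and \( L \) is defined on the complement of a set of measure zero.
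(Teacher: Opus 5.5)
Your proposal is correct and follows the same route as the paper: you define $L$ on the good set via Lemma~\ref{lemma:grassmann_log}, get well-definedness from its $\operatorname{SO}(k)$-invariance, and discard a null set. Beyond the paper, you justify the null-set claims by analyticity (the paper only asserts them) and show that the value does not depend on the choice of SVD (the paper sidesteps this by fixing one SVD per $T$). One wording fix: submersions do not in general map null sets to null sets, but what you actually use is true by the local product structure and Fubini, namely that a set $B \subseteq \operatorname{Gr}(k,n)$ is null if and only if its saturated preimage in $\operatorname{SO}(n)$ is null.
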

\begin{proof}
    For every
    \begin{equation*}
        Y =
        \begin{pmatrix}
            Y_1 \\
            Y_2
        \end{pmatrix}
    \end{equation*}
    with \( Y_1 \) invertible and \( \sigma_1 > 0 \) simple, Lemma \ref{lemma:grassmann_log} gives an element \( X_Y \in M_{k, n-k}(\mathbb{R}) \) such that the span of the first \( k \) columns of \( \exp (A(X_Y)) \) is equal to \( \operatorname{span}(Y) \) (as oriented \( k \)-planes). Note that the set
    \begin{equation*}
        \bigg\{ Y =
        \begin{pmatrix}
            Y_1 \\
            Y_2
        \end{pmatrix}
        \colon Y_1 \text{ is not invertible} \bigg\}
    \end{equation*}
    has measure zero. Moreover, the set of such \( Y \)'s whose biggest singular value is not simple has also measure zero. Set \( \tilde{L}(Y) = X_Y \). By Lemma \ref{lemma:grassmann_log}, \( \tilde{L} \) is \( \operatorname{SO}(k) \)-invariant and hence induces a map \( L \colon \operatorname{Gr}(k, n) \to \mathfrak{p} \) defined on the complement of a set of measure zero.
\end{proof}

%The construction has a direct geometric interpretation. The singular values \(\sigma_i\) of \(Y_2Y_1^{-1}\) are the tangents of the principal angles \(\theta_i\) between \(\operatorname{span}(Y)\) and the basepoint \(o\), so that \(\theta_i=\arctan(\sigma_i)\). The corresponding left and right singular vectors determine the principal directions in the orthogonal complement of \(o\) and in \(o\), respectively. Thus replacing \(\Sigma\) by \(\Theta=\arctan(\Sigma)\) converts the principal-angle data into tangent-space coordinates at the basepoint. The assumption that \(Y_1\) is invertible is equivalent to requiring that all principal angles with the basepoint are strictly smaller than \(\pi/2\), i.e.\ that the point lies away from the cut locus of \(o\). In this case the element above is the minimizing Riemannian logarithm at the basepoint.

\section{Experiments}
\label{sec:experiments}
In this section we discuss the experiments we ran to validate our framework. We refer the reader to Appendix \ref{sec:implementation details} for model architecture details and hyperparameters.

\medskip

We first studied a two-dimensional checkerboard distribution, originally introduced in \cite{grathwohl2019} and used in \cite{lipman2023} to test Euclidean flow matching on \( \mathbb{R}^2 \). We wrapped the distribution onto the sphere \( S^2 \) as in \cite{zaghen2025} and we used the representation of the sphere as the symmetric space \( S^2 = \operatorname{SO}(3) / \operatorname{SO}(2) \) (cf.\ Example \ref{ex:sym}). We stereographically projected the results back to \( \mathbb{R}^2 \) to ease visualization.
We remark that there are simpler ways in the literature to perform flow matching tasks on the sphere \cite{chen2024, zaghen2025}. This experiment is only meant as a visual illustration of our method, which is why we omitted a mathematical treatment in Section \ref{sec:flow_matching_on_an_rss}.

\begin{figure}[h]
    \centering
    \includegraphics[scale=0.25]{"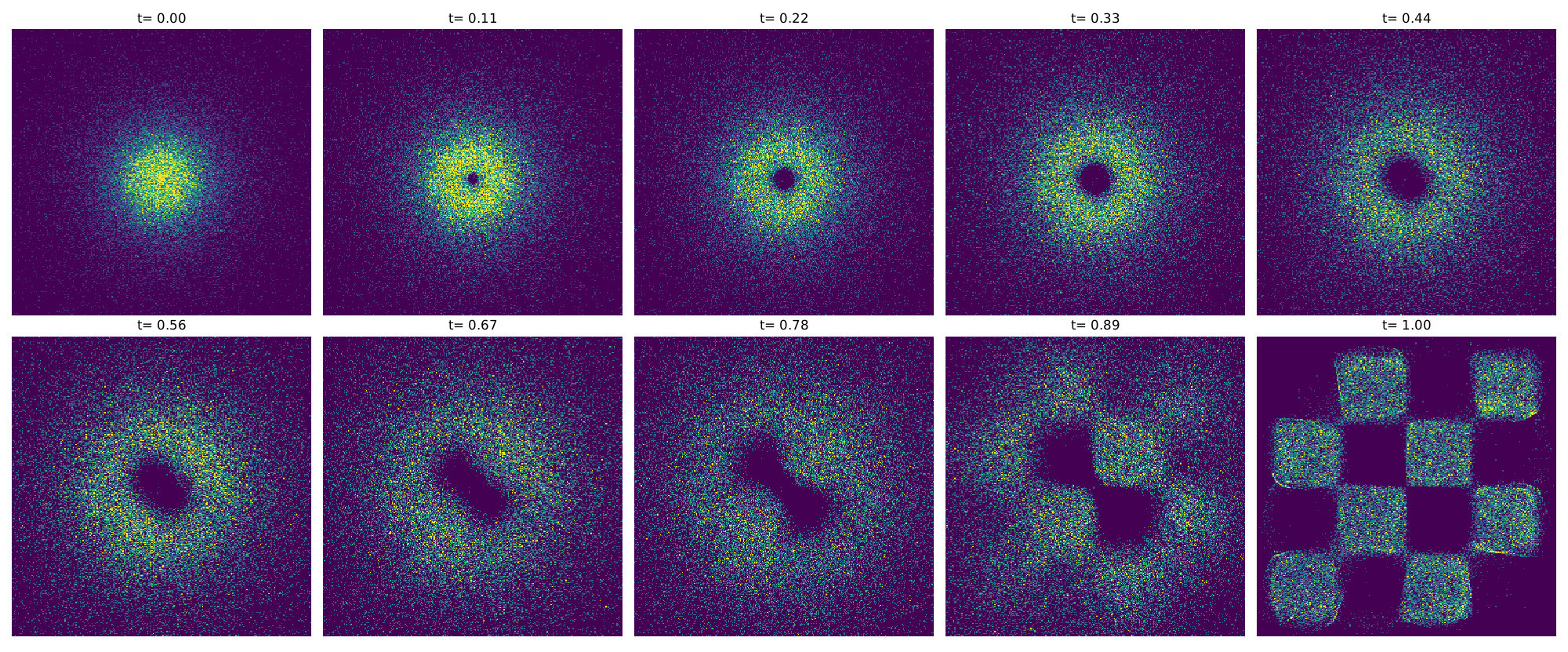"}
    \caption{Flow matching trajectories on \( S^2 = \operatorname{SO}(3) / \operatorname{SO}(2) \). The results were stereographically projected to \( \mathbb{R}^2 \).}
    \label{fig:sphere}
\end{figure}

We then tested our framework on the Grassmannian \( \operatorname{SO}(n) / \operatorname{SO}(k) \times \operatorname{SO}(n-k) \). We ran experiments on the benchmark datasets DW4, LJ13 and LJ55 introduced in \cite{koehler2020}. The first two datasets were also studied in \cite{yataka2023} in the context of CNFs on Grassmannians. As in \cite{yataka2023}, we sampled the datasets using Markov Chain Monte Carlo methods (see Appendix \ref{sec:implementation details} for details) and preprocessed the data in order to obtain points on \( \operatorname{Gr}(k, n) \). In \cite{yataka2023}, the authors apply the Gram-Schmidt algorithm to the sampled data points (which are \( n \times k \) matrices, \( n = 4, k = 2 \) for DW4, \( n = 13, k = 3 \) for LJ13) and interpret the output as orthonormal \( k \)-frames in \( \mathbb{R}^n \). This makes sense because \( \operatorname{Gr}(k, n) \) can be thought of as the set of orthonormal \( k \)-frames in \( \mathbb{R}^n \) up to \( \operatorname{SO}(k) \)-action. Our preprocessing is conceptually similar but it is implemented in a slightly different way:

\begin{enumerate}
    \item first, for every element \( X \in M_{n, k} (\mathbb{R}) \) (of rank \( k \)) in the datasets we computed a reduced QR decomposition \( X = YR \), where \( Y \in M_{n,k}(\mathbb{R}) \) has orthonormal columns and \( R \) is upper-triangular. It is easy to see that the span of the columns of \( Y \) is equal to the span of the columns of \( X \) and hence \( Y \) forms an orthonormal frame for that subspace;
    \item we then applied Lemma \ref{lemma:grassmann_log} to \( Y \), obtaining an element \( A \in \mathfrak{p} \) such that the first \( k \) columns of \( \exp(A) \) span the same subspace as \( Y \). Thus the resulting element in \( \mathfrak{p} \) represents exactly the original point of the Grassmannian;
    \item finally, using the identification of \( \mathfrak{p} \) with \( M_{k,n-k}(\mathbb{R}) \), we performed Euclidean flow matching on these vectors.
\end{enumerate}
We provide more details in Appendix \ref{sec:implementation details}.

We remark that it is difficult to compare our results (in terms of the test NLL) to those of \cite{yataka2023} because the models operate on different spaces. Indeed, in \cite{yataka2023} the noise and target distributions live on the Grassmannian, whereas our distributions live on \( \mathfrak{p} \subseteq \mathfrak{so}(n) \). While one could generate the noise on the Grassmannian and then map it to \( \mathfrak{p} \) using the construction of Lemma \ref{lemma:grassmann_log}, we decided to work directly on \( \mathfrak{p} \), so that the flow matching problem remains Euclidean throughout training and no geodesic interpolation on the Grassmannian has to be computed.

\section{Conclusions}
\label{sec:conclusions}
We proposed Cartan flow matching,  a general Cartan-geometric framework to linearize flow matching models on symmetric spaces \( G/K \). Our approach exploits the Cartan decomposition \( \mathfrak{g} = \mathfrak{k} \oplus \mathfrak{p} \) of \( \mathfrak{g} = \operatorname{Lie}(G) \) to encode points of the symmetric space by suitably chosen elements of \( \mathfrak{p} \). As a result, we are able to translate a generative modeling task on a potentially geometrically complicated manifold to a Euclidean flow matching task on a vector space whose dimension is the same as that of the manifold. The interpolations used by the model take place in \( \mathfrak{p} \). This enables us to \enquote{linearize} the problem, sidestepping the difficulty of modeling distributions on curved geometries, as well as the hurdle of having to compute premetrics or geodesics on manifolds that do not have simple closed-form geodesics. Moreover, since symmetric spaces are in particular homogeneous spaces, our methods enable us to learn invariant distributions in a systematic way, without the need to explicitly build invariance into the model, yielding greater generality and flexibility.
As a proof of concept, we showcased our methods on two symmetric spaces, namely the sphere \( S^2 = \operatorname{SO}(3) / \operatorname{SO}(2) \) and Grassmannians \( \operatorname{Gr}(k, n) = \operatorname{SO}(n) / \operatorname{SO}(k) \times \operatorname{SO}(n-k) \).

\subsection*{Limitations}
The choice of the elements in \( \mathfrak{p} \) is not unique along the cut locus of the manifold. We did not study how choosing different elements affects the stability of the framework.
Our contribution is mainly theoretical, and due to its radically novel approach it is not suitable for comparison with standard Riemannian flow matching approaches \cite{chen2024, zaghen2025}. Quantitative comparisons with other manifold-based approaches \cite{yataka2023} are not meaningful, as they work directly on the Grassmannian and it is not clear how to extract a sensible notion of NLL to compare.

%%%%%%%%%%%%%%%%%%%%%%%%%%%%%%
% bibliography

%\printbibliography[
%    heading=bibintoc,
%    title={References}
%]

\bibliographystyle{plain}
\bibliography{bib}

%%%%%%%%%%%%%%%%%%%%%%%%%%%%%%

\appendix

\section{Proof of Lemma \ref{lemma:grassmann_log}}
\label{sec:proof_grassmann_log}

We briefly recall the notation from Section \ref{sec:grassmann_log}. We choose the basepoint
\begin{equation*}
    o = \operatorname{span}(e_1, \dots, e_k) \in \operatorname{Gr}(k,n).
\end{equation*}
Let \( Y \in M_{n,k}(\mathbb{R}) \) have orthonormal columns, so that \( \operatorname{span}(Y) \) represents a point of \( \operatorname{Gr}(k,n) \). Write
\begin{equation*}
    Y =
    \begin{pmatrix}
        Y_1 \\
        Y_2
    \end{pmatrix},
    \qquad
    Y_1 \in M_{k,k}(\mathbb{R}),
    \quad
    Y_2 \in M_{n-k,k}(\mathbb{R}).
\end{equation*}

\begin{lemma}
    Assume that \(Y_1\) is invertible and let
    \begin{equation*}
        Y_2Y_1^{-1} = U\Sigma V^t,
        \qquad
        \Sigma = \operatorname{diag}(\sigma_1,\dots,\sigma_q),
        \qquad
        q=\min\{k,n-k\},
    \end{equation*}
    be a compact singular value decomposition with the singular values in decreasing order. We further assume that the largest singular value \( \sigma_1 \) is positive and simple. Define
    \begin{equation*}
        \Theta =
        \begin{cases}
            \operatorname{diag}(\arctan(\sigma_1),\dots,\arctan(\sigma_q)), & \det Y_1 > 0 \\
            \operatorname{diag}(\arctan(\sigma_1) - \pi, \arctan(\sigma_2), \dots,\arctan(\sigma_q)), & \det Y_1 < 0 \\
        \end{cases}
    \end{equation*}
    and set
    \begin{equation*}
        X_Y = -V\Theta U^t,
        \qquad
        A(X_Y) =
        \begin{pmatrix}
            0 & X_Y \\
            -X_Y^t & 0
        \end{pmatrix}
        \in \mathfrak{p}.
    \end{equation*}
    Then
    \begin{equation*}
        \operatorname{span}\big(\exp(A(X_Y))^1, \dots, \exp(A(X_Y))^k \big)
        = \operatorname{span}(Y),
    \end{equation*}
    as oriented \( k \)-planes, where \( \exp(A(X_Y))^i \) is the \( i \)-th column of \( \exp(A(X_Y)) \). Moreover, if \( \tilde{Y} = Y Z \), where \( Z \in \operatorname{SO}(k) \), then \( X_Y = X_{\tilde{Y}} \).
\end{lemma}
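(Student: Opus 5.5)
The plan is to compute the first \( k \) columns of \( \exp(A(X_Y)) \) in closed form from the SVD data. Then I would show they equal \( Y \) multiplied on the right by an invertible \( k\times k \) matrix of positive determinant. Set \( X = X_Y = -V\Theta U^t \), where \( V \in M_{k,q}(\mathbb{R}) \) and \( U \in M_{n-k,q}(\mathbb{R}) \) have orthonormal columns. Write \( P = I_k - VV^t \), which is zero if \( q = k \).

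\textbf{Step 1: the exponential.} From \( A^2 = \operatorname{diag}(-XX^t, -X^tX) \) and \( XX^t = V\Theta^2V^t \), summing the power series blockwise gives the first \( k \) columns of \( \exp(A) \):
\begin{equation*}
    W = \begin{pmatrix} \cos\big(\sqrt{XX^t}\big) \\ -X^t\, \tfrac{\sin\sqrt{XX^t}}{\sqrt{XX^t}} \end{pmatrix}
    = \begin{pmatrix} V\cos\Theta\, V^t + P \\ U\sin\Theta\, V^t \end{pmatrix}.
\end{equation*}
The series in \( XX^t \) are entire functions, so this is valid even when some \( \theta_i = 0 \). I regard this computation as routine.

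\textbf{Step 2: matching the span and the orientation.} Write \( Y = \begin{pmatrix} I_k \\ U\Sigma V^t \end{pmatrix} Y_1 \). In both cases for \( \Theta \) we have \( \tan\theta_i = \sigma_i \). This holds because replacing \( \theta_1 \) by \( \theta_1 - \pi \) flips the signs of both \( \cos\theta_1 \) and \( \sin\theta_1 \). Hence \( U\Sigma V^t C = U\Sigma\cos\Theta\, V^t = U\sin\Theta\, V^t \), where \( C = V\cos\Theta V^t + P \), and therefore
\begin{equation*}
    W = \begin{pmatrix} I_k \\ U\Sigma V^t \end{pmatrix} C = Y\,Y_1^{-1} C .
\end{equation*}
This shows \( \operatorname{span}(W) = \operatorname{span}(Y) \), provided \( C \) is invertible. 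The eigenvalues of \( C \) are the \( \cos\theta_i \) together with \( 1 \) on the range of \( P \).

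If \( \det Y_1 > 0 \), all \( \theta_i \in [0,\pi/2) \), so \( C \) is positive definite. Then \( \det(Y_1^{-1}C) > 0 \).

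If \( \det Y_1 < 0 \), exactly one eigenvalue of \( C \), namely \( \cos(\theta_1 - \pi) = -\cos\theta_1 < 0 \), is negative. So \( \det C < 0 \) and again \( \det(Y_1^{-1}C) > 0 \).

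In both cases the orientations agree. The \( -\pi \) shift is exactly what repairs the orientation in the second case.

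\textbf{Step 3: well-definedness and \( \operatorname{SO}(k) \)-invariance.} This is the step I expect to need the most care, because \( X_Y \) is defined through a non-unique SVD.

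Invariance under \( \tilde Y = YZ \) is immediate. We have \( \tilde Y_1 = Y_1Z \), so \( \tilde Y_2\tilde Y_1^{-1} = Y_2Y_1^{-1} \), and \( \det\tilde Y_1 = \det Y_1 \) since \( \det Z = 1 \). Hence both the matrix being decomposed and the case distinction are unchanged.

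It remains to check that \( X_Y \) does not depend on the choice of SVD of \( M = Y_2Y_1^{-1} \).

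In the first case, \( V\arctan(\Sigma)U^t \) equals \( f(M)^t \) for the odd function \( f = \arctan \). It can be written as \( M^t g(MM^t) \) with \( g(s) = \arctan(\sqrt{s})/\sqrt{s} \) (and \( g(0) = 1 \)). This expression is manifestly independent of the SVD.

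In the second case, \( X_Y \) differs from the first-case value by \( \pi v_1u_1^t \). Since \( \sigma_1 > 0 \) is simple, the singular vectors \( u_1, v_1 \) are determined up to a simultaneous sign change. That change leaves \( v_1u_1^t \) invariant, so this is precisely where the hypothesis on \( \sigma_1 \) enters.

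Together these give a well-defined \( X_Y \) that depends only on the oriented plane, which completes the argument.
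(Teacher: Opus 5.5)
Your proof is correct. It differs from the paper's in two places.

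In Step 2, the paper uses \(Y^tY=I_k\) to get \(Y_1^t(I_k+T^tT)Y_1=I_k\). It then polar-decomposes \(Y_1=CR\), with \(C=(I_k+T^tT)^{-1/2}=I_k-VV^t+V\cos(\Theta)V^t\) and \(R\in O(k)\), \(\det R=\operatorname{sgn}\det Y_1\). This gives \(Y=QR\), where \(Q\) is the first \(k\) columns of \(\exp(A(X_Y))\) in the unshifted case. In the shifted case it writes those columns as \(QS\) with the reflection \(S=I_k-VV^t+VDV^t\), and concludes from \(Y=(QS)(SR)\), \(\det(SR)=1\).

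You instead factor both matrices through the graph \(\begin{pmatrix} I_k \\ M \end{pmatrix}\) and read the orientation off \(\operatorname{sgn}\det C\) in \(W=Y\,Y_1^{-1}C\). Your shifted \(C\) is exactly the paper's \(CS\), so this is the same computation repackaged. However, it needs neither the polar decomposition nor orthonormality of \(Y\). It therefore shows directly that the oriented plane is determined by \(M\) and \(\operatorname{sgn}\det Y_1\) alone. The orthogonality of the transition matrix, which the paper obtains along the way, also follows in your setting, since \(W\) and \(Y\) both have orthonormal columns.

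The more substantive difference is Step 3. The paper proves invariance by ``using the same chosen singular value decomposition of \(T\)'' for \(Y\) and \(\tilde Y\). That is, it tacitly fixes the SVD as a function of \(T\), and its proof never invokes the hypothesis that \(\sigma_1\) is positive and simple. Your expression \(X_Y=-M^t g(MM^t)\), shifted by \(\pi v_1u_1^t\) when \(\det Y_1<0\), shows that \(X_Y\) does not depend on which SVD is chosen. This justifies the lemma's phrasing ``let \dots\ be a compact singular value decomposition'' and pinpoints where the hypothesis on \(\sigma_1\) is actually needed.
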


\begin{proof}
    Set
    \begin{equation*}
        T = Y_2 Y_1^{-1} = U \Sigma V^t,
        \qquad
        \Theta = \arctan(\Sigma)
        = \operatorname{diag}(\theta_1,\ldots,\theta_q),
    \end{equation*}
    so that \(\Sigma = \tan(\Theta)\). Since \(Y_2 = T Y_1\) and \(Y^t Y = I_k\), we have \( I_k = Y_1^t (I_k + T^t T) Y_1. \)
    Let
    \( C = (I_k + T^t T)^{-1/2} \). Then there exists \(R \in O(k)\) such that \( Y_1 = C R \).
    Using the singular value decomposition of \(T\), we obtain
    \begin{equation*}
        C = I_k - V V^t + V \cos(\Theta) V^t.
    \end{equation*}
    Moreover,
    \begin{equation*}
        Y_2 = T Y_1 = U \tan(\Theta) V^t C R = U \sin(\Theta) V^t R.
    \end{equation*}
    Hence
    \begin{equation*}
        Y =
        \begin{pmatrix}
            I_k - V V^t + V \cos(\Theta) V^t \\
            U \sin(\Theta) V^t
        \end{pmatrix}
        R.
    \end{equation*}
    Denote
    \begin{equation*}
        Q =
        \begin{pmatrix}
            I_k - V V^t + V \cos(\Theta) V^t \\
            U \sin(\Theta) V^t
        \end{pmatrix}.
    \end{equation*}
    Since \(C\) is positive definite, \( \det R = \operatorname{sgn}(\det Y_1) \).
    Let
    \begin{equation*}
        E_0 =
        \begin{pmatrix}
            I_k \\
            0
        \end{pmatrix}
    \end{equation*}
    and suppose that \(\det Y_1 > 0\). Then \(R \in SO(k)\).

    In this case \(\widehat{\Theta} = \Theta\), and, since \( X_Y = -V \Theta U^t \), a straightforward calculation gives
    \begin{equation*}
        \exp(A(X_Y)) E_0 =
        \begin{pmatrix}
            I_k - V V^t + V \cos(\Theta) V^t \\
            U \sin(\Theta) V^t
        \end{pmatrix}
        = Q.
    \end{equation*}
    Thus
    \begin{equation*}
        Y = \exp(A(X_Y)) E_0 R, \quad R \in SO(k),
    \end{equation*}
    and hence the columns of \(Y\) and of \(\exp(A(X_Y)) E_0\)
    span the same oriented \(k\)-plane.

    Suppose now that \(\det Y_1 < 0\). Then \(R \in O(k)\) and \(\det R = -1\). For the sake of clarity, we denote
    \begin{equation*}
        \begin{aligned}
            \Theta &= \operatorname{diag} (\theta_1,\theta_2,\ldots,\theta_q), \\
            \widehat{\Theta} &= \operatorname{diag} (\theta_1-\pi,\theta_2,\ldots,\theta_q)
        \end{aligned}
    \end{equation*}
    and \( X_Y = -V \widehat{\Theta} U^t \). Let
    \( D = \operatorname{diag}(-1,1,\ldots,1) \in O(q) \)
    and define
    \begin{equation*}
        S = I_k - V V^t + V D V^t.
    \end{equation*}
    Then \(S \in O(k)\), \(S^2 = I_k\), and \( \det S = -1 \). Furthermore,
    \( \cos(\widehat{\Theta}) = \cos(\Theta) D \) and \( \sin(\widehat{\Theta}) = \sin(\Theta) D \). It follows that
    \begin{align*}
        \exp(A(X_Y)) E_0
        &=
        \begin{pmatrix}
        I_k - V V^t + V \cos(\widehat{\Theta}) V^t \\
        U \sin(\widehat{\Theta}) V^t
        \end{pmatrix} \\
        &=
        \begin{pmatrix}
        I_k - V V^t + V \cos(\Theta) V^t \\
        U \sin(\Theta) V^t
        \end{pmatrix}
        \bigl(I_k - V V^t + V D V^t\bigr) \\
        &=
        Q S.
    \end{align*}
    Since \( S^2 = I_k \), we therefore have
    \begin{equation*}
        Y = Q R = (Q S)(S R) = \exp(A(X_Y)) E_0 (S R).
    \end{equation*}
    Finally,
    \begin{equation*}
        \det(SR) = \det S \, \det R = (-1)(-1) = 1,
    \end{equation*}
    so \(SR \in SO(k)\). Hence also in this case the columns of \(Y\) and of \(\exp(A(X_Y)) E_0\) span the same oriented \(k\)-plane.

    It remains to prove the invariance statement. Let \(\widetilde{Y} = YZ\), where \(Z \in SO(k)\), and write
    \begin{equation*}
        \widetilde{Y} =
        \begin{pmatrix}
            \widetilde{Y}_1 \\
            \widetilde{Y}_2
        \end{pmatrix}.
    \end{equation*}
    Then
    \begin{equation*}
        \widetilde{Y}_2 \widetilde{Y}_1^{-1} = Y_2 Z (Y_1 Z)^{-1} = Y_2 Y_1^{-1} = T.
    \end{equation*}
    Moreover, \( \det \widetilde{Y}_1 = \det(Y_1 Z) = \det Y_1 \), since \(\det Z = 1\). Thus, using the same chosen singular value decomposition of \(T\), the same choice of branch for \(\widehat{\Theta}\) is made for \(Y\) and \(\widetilde{Y}\). Consequently, \( X_{\widetilde{Y}} = -V \widehat{\Theta} U^t = X_Y \).
\end{proof}

\section{Implementation details and experiments}
\label{sec:implementation details}
This section contains the implementation details and hyperparameters used in generating the datasets and training the models.

\subsection{Dataset generation}
The DW4 system consists of \( 4 \) particles in \( \mathbb{R}^2 \) with potential energy given by
\begin{equation*}
    U_{\mathrm{DW}} = \frac{1}{2 \tau} \sum_{i \neq j} a (d_{ij} - d_0) - b (d_{ij} - d_0)^2 + c(d_{ij} - d_0)^4,
\end{equation*}
where \( d_{ij} \) is the distance between the \( i \)-th and \( j \)-th particle.

The LJ13 (resp.\ LJ55) dataset consists of \( 13 \) (resp.\ \( 55 \)) particles in \( \mathbb{R}^3 \) with potential given by
\begin{equation*}
    U_{\mathrm{LJ}} = \frac{\epsilon}{2 \tau} \sum_{i \neq j} \Bigg( \bigg( \frac{r_m}{d_{ij}} \bigg)^{12} - 2\bigg( \frac{r_m}{d_{ij}} \bigg)^{6} \Bigg).
\end{equation*}
The parameters \( a, b, c, d_0, \epsilon, \tau, r_m \) are the same as those used in \cite{koehler2020, satorras2021}. The datasets were sampled using MCMC methods. In particular, we used \( 10^3 \) parallel walkers to sample \( 10^4 \) data points, after a long burn-in phase of \( 2 \cdot 10^5 \) steps.

\subsection{Training}
For the checkerboard experiment, we used a simple MLP of depth \( 3 \) and width \( 256 \).

For DW4, LJ13 and LJ55, we used a network of depth \( 3 \) and ConcatSquash layers \cite{grathwohl2019} of width \( 512 \). The width and depth of the gate part of the ConcatSquash layers are \( 64 \) and \( 1 \) respectively. We chose to embed time
\begin{equation*}
    t \mapsto (\cos{t}, \sin{t}, \dots, \cos{4t}, \sin{4t})
\end{equation*}
and feed this representation to the gate.

We trained all models on datasets of size \( 10^5 \) and validated them on test sets of size \( 10^4 \). We computed the negative log likelihood on the test set by splitting it up into \( 10 \) chunks of size \( 10^3 \) and averaging the NLLs on the chunks. The NLL was computed by numerically integrating the learned vector field with the midpoint method. We used the AdamW optimizer with learning rate \( 10^{-4} \) and weight decay \( 10^{-5} \) for all experiments.

\subsection{Experiments}
For all our experiments, we generated the noise distributions on \( \mathfrak{p} \) by sampling the entries of the vectors independently from a centered Gaussian distribution.

For the experiments on the Grassmannian \( \operatorname{Gr}(k, n) \) we computed the negative log likelihood on the test sets. In the table below we report the test NLL for each experiment averaged over \( 3 \) runs.\begingroup
\renewcommand{\arraystretch}{1.2}
\begin{table}[h]
    \centering
    \begin{tabular}{c|ccc}
        & DW4 & LJ13 & LJ55 \\
        \hline\hline
        NLL & \( -1.70 \pm 0.02 \) & \( -92.03 \pm 0.44 \) & \( -531.23 \pm 0.49 \)
    \end{tabular}
    \caption{Test negative log likelihood averaged over \( 3 \) runs.}
    \label{tab:label}
\end{table}
\endgroup
As noted in Section \ref{sec:experiments}, we do not have a meaningful way to compare our results with existing methods because the latter are either not directly applicable to quotient spaces \cite{chen2024, zaghen2025} or they evaluate NLL directly on the Grassmannian and hence there is no way to make a sensible quantitative comparison.

\section{Compact Riemannian symmetric spaces.}\label{sec:compact-rss}

For the reader's convenience we recap the classification of compact Riemannian symmetric spaces.

Every connected, simply connected, compact Riemannian symmetric space is a homogeneous space $G/K$, where $G$ is a compact semisimple Lie group (i.e.\ the
Cartan-Killing form of the Lie algebra of $G$, defined as $\langle X,Y\rangle:=\mathrm{Tr} (\mathrm{ad}X \mathrm{ad}Y)$, is non-degenerate. Here, \( \operatorname{ad} \colon \mathfrak{g} \to \mathfrak{gl}(\mathfrak{g}) \) is the adjoint representation of \( \mathfrak{g} \) and it is given by \( \operatorname{ad}(X) = (d_e \operatorname{Ad})(X) \) for all \( X \in \mathfrak{g} \)) and $K$ is the subgroup of fixed points of an involutive automorphism of $G$ (see \cite[Chapter 10]{helgason2001}).

Each such space falls into one of the following families (see \cite[Chapter 10, Table 5]{helgason2001} and the special isomorphisms in \cite[Chapter 10, Section 6]{helgason2001}).

\noindent\textbf{Classical families.}
\begin{itemize}
    \item Type AI: $\operatorname{SU}(n)/ \operatorname{SO}(n)$, dimension $\frac{1}{2}(n-1)(n+2)$,
    \item Type AII: $\operatorname{SU}(2n)/\operatorname{Sp}(n)$, dimension $(n-1)(2n+1)$,
    \item Type AIII: $\operatorname{SU}(p+q)/\operatorname{S} (\operatorname{U}(p)\times \operatorname{U}(q))$,
      dimension $2pq$, (complex Grassmannians),
    \item Type BDI: $\operatorname{SO}(p+q)/\operatorname{SO}(p)\times \operatorname{SO}(q)$, dimension $pq$,
      (real Grassmannians),
    \item Type DIII: $\operatorname{SO}(2n)/ \operatorname{U}(n)$, dimension $n(n-1)$, %$[\frac{1}{2}n]$
       %and $S^n \cong SO(n+1)/SO(n)$,
    \item Type CI: $\operatorname{Sp}(n)/\operatorname{U}(n)$, dimension $n(n+1)$,
\item Type CII:
    $\operatorname{Sp}(p+q)/\operatorname{Sp}(p)\times \operatorname{Sp}(q)$, dimension $4pq$,
 % (quaternionic Grassmannians).
\end{itemize}

where:
\begin{itemize}
    \item $\operatorname{SO}(n)$ denotes the special orthogonal real Lie group,
    \item $\operatorname{SU}(n)$ denotes the special unitary (real) Lie group,
    \item $\operatorname{Sp}(n)$ denotes the compact real form of the complex
  symplectic group.
\end{itemize}
The full definitions of these groups and their Lie algebras appear in \cite[Chapter 10, Section 2]{helgason2001}.

\noindent\textbf{Exceptional cases.}
There is also a  finite list of additional examples associated
with exceptional Lie groups, of type $E$, $F$ and $G$:
$EI-EIX$, $FI$, $FII$, $G$ (see \cite[Chapter 10, Table 5]{helgason2001}).

\end{document}